\documentclass[a4paper]{article}
\usepackage[a4paper, margin=1in]{geometry}
\usepackage{indentfirst}
\usepackage{amssymb}
\usepackage{mathtools}
\usepackage{setspace}
\usepackage{color, soul}
\usepackage{listings}
\usepackage{todonotes}
\usepackage{multirow}
\usepackage{epstopdf}
\usepackage{caption, subcaption}
\usepackage{hyperref}

\definecolor{dark-gray}{gray}{0.3}

\newtheorem{lemma}{Lemma}
\newtheorem{remark}{Remark}
\newenvironment{proof}{\paragraph{Proof:}}{\hfill$\square$}


\title{\textbf{A unified framework for Hamiltonian deep neural networks}}





\author{%
{Clara Luc\'ia Galimberti$^1$}, {Liang Xu$^1$} and {Giancarlo {Ferrari Trecate}$^1$} \\
\\
\texttt{\{clara.galimberti, liang.xu, giancarlo.ferraritrecate\}@epfl.ch}\\
\\
$^1$Institute of Mechanical Engineering, \\École Polytechnique Fédérale de Lausanne, Switzerland.%
}
\begin{document}

\maketitle

\begin{abstract}%
	Training deep neural networks (DNNs) can be difficult due to the occurrence of vanishing/exploding gradients during weight optimization.
	To avoid this problem, we propose a class of DNNs stemming from the time discretization of Hamiltonian systems.
	The time-invariant version of the corresponding Hamiltonian models enjoys marginal stability, a property that, as shown in previous works and for specific DNNs architectures, can mitigate convergence to zero or divergence of gradients.
	In the present paper, we formally study this feature by deriving and analysing the backward gradient dynamics in continuous time.
	The proposed Hamiltonian framework, besides encompassing existing networks inspired by marginally stable ODEs, allows one to derive new and more expressive architectures.
	The good performance of the novel DNNs is demonstrated on benchmark classification problems, including digit recognition using the MNIST dataset.
	\\
	\textbf{Keywords: }	Deep Neural Networks, Dynamical Systems, Hamiltonian Systems, Gradient Dynamics.%
\end{abstract}

\section{Introduction}
Deep learning has achieved remarkable success in different fields like computer vision, speech recognition and natural language processing~\cite{He2016, XiongNLP}.
There is also a growing interest in using deep neural networks (DNNs) for approximating complex controllers~\cite{Lucia2018ifac, Zoppoli2020book}.
In spite of several progresses, the training of DNNs still presents some difficulties.
Most optimization algorithms for DNNs, such as stochastic gradient descent, involve the computation of gradients that, as observed in \cite{Bengio94},  can explode or vanish, hence making the learning problem ill-posed.

Recently, it has been shown that these issues are mitigated for specific classes of DNNs which stem from the time discretization of ordinary differential equations (ODEs)~\cite{Haber_2017, haber2017, Chang19, lu2017finite, Weinan2017}.
These results leverage the stability properties of the underlying ODE for characterizing relevant behaviours of the corresponding DNNs~\cite{Haber_2017}.
Specifically, instability of the underlying ODE results in unstable forward propagation in the DNN model, while convergence to zero of system states can lead to vanishing gradients during training.
This observation suggests using DNN architectures based on dynamical system models that produce bounded and non-vanishing state trajectories.
An example is provided by ODEs based on skew-symmetric maps, which have been used in~\cite{Haber_2017, Chang19} for defining anti-symmetric DNNs.
Another example is given by dynamical systems in the form
\begin{align}\label{eq:simpleHamiltonianSystems}
\dot{\bf y}=-\nabla_{\bf z} H({\bf y},{\bf z}), \quad \dot{\bf z}=\nabla_{\bf y} H({\bf y},{\bf z}),
\end{align}
where $H(\cdot, \cdot)$ is a Hamiltonian function.
This class of ODEs has motivated the development of Hamiltonian-inspired DNNs in~\cite{Haber_2017}, whose effectiveness has been shown in several benchmark classification problems \cite{Haber_2017, Chang19, Chang18a}.

However, all these works consider only restricted classes of skew-symmetric maps or particular Hamiltonian functions, which, together with the specific structure of the dynamics in~\eqref{eq:simpleHamiltonianSystems}, limit the representation power of the resulting DNNs.

In this work, we leverage general models of Hamiltonian systems~\cite{vanderSchaft2017} and provide a unified framework for defining Hamiltonian DNNs (H-DNNs for short), which, under very mild assumptions, encompasses anti-symmetric and Hamiltonian-inspired networks.
Furthermore, we define new classes of DNNs, which are more expressive than those in~\cite{Haber_2017} and \cite{Chang18a,Chang19}, and can achieve comparable performance while using less layers.
We show this feature using several benchmark classification problems, including digit recognition based on the MNIST dataset.

Hamiltonian dynamics can be marginally stable by construction, independent of the specific model parameters.
We leverage this property and the use of regularized loss functions penalizing the variation of weights over consecutive layers~\cite {Haber_2017} for studying the well-posedness of the training process.
To this purpose, we first consider the simplified setting where DNN weights are constant across layers, which corresponds to letting the regularization parameter grow to infinity.
By analysing the underlying ODE, we prove the marginal stability of the backward gradient dynamics, which implies the absence of vanishing/exploding gradients during training.
In addition, we present a simulation study showing that this property is also verified when the regularisation parameter is finite and network parameters can change across layers.

The reminder of our paper is organized as follows.
Related works are discussed in Section~\ref{sec:related_work}. 
In Section~\ref{sec:methods}, we present H-DNNs, and analyse the stability properties of the backward gradient dynamics.
Numerical examples are provided in Section~\ref{sec:examples}, which is followed by concluding remarks in Section~\ref{sec:conclusion}.
{Throughout this work, we use the column convention for gradients, i.e. the gradient $\nabla{f}$ of a real-valued function $f({\bf x})$ is a column vector.
	
\section{Related works}\label{sec:related_work}

\subsection{DNN induced by ODE discretization} \label{sec:ODEinspirednet}

The connection between neural networks and ODEs can be established by starting from the nonlinear system
\begin{equation}
\dot{\bf y}(t) = {\bf f}({\bf y}(t), {\boldsymbol{\theta}}(t)),\quad {\bf y}(0) = {\bf y}_0, \quad {\bf y}(t)\in \mathbb{R}^n, \quad 0 \leq t \leq T,
\label{eq:firstorderODE}
\end{equation}
where ${\boldsymbol{\theta}}(t)\in \mathbb{R}^{n_\theta}$ is a vector of parameters.
For a given $N\in \mathbb{N}$, we consider the forward Euler discretization of~\eqref{eq:firstorderODE} with time step $h = \frac{T}{N} >0$, giving
\begin{equation}
{\bf y}_{j+1} = {\bf y}_j + h\, {\bf f}({\bf y}_j, {\boldsymbol{\theta}}_j), \qquad \text{for }j=0,1,\dots,N-1.
\label{eq:firstorderODE_td}
\end{equation}
Equation \eqref{eq:firstorderODE_td} can be seen as the model of a residual neural network with $N$ layers~\cite{He2016}, where ${\bf y}_j, {\bf y}_{j+1} \in \mathbb{R}^{n}$ are the input and output vectors of layer $j$, respectively.
We highlight that, for the discretization of \eqref{eq:firstorderODE}, one could replace the forward Euler method with different discretization methods, hence obtaining different DNN architectures~\cite{Haber_2017}.
In DNNs, ~\eqref{eq:firstorderODE_td} is usually complemented with an output layer ${\bf y}_{N+1} = {\bf f}_N({\bf y}_{N}, {\boldsymbol{\theta}}_{N})$ that depends on the nature of the learning problem (e.g., regression or classification).

DNN training is performed by computing the network weights that minimize a loss function 
\begin{equation*}
\mathcal{L}({\bf y}_{N+1}^1, \ldots, {\bf y}_{N+1}^s, {\boldsymbol{\theta}}),
\end{equation*}
where $\{1, \ldots, s\}$ is the index set of samples used to optimize the network weights and ${\boldsymbol{\theta}}$ collects all DNN parameters.
A remarkable feature of ODE-inspired DNNs is that their properties can be studied, albeit in an approximate way, in terms of the continuous-time nonlinear system~\eqref{eq:firstorderODE}, which is often easier to analyse than~\eqref{eq:firstorderODE_td}~\cite{Haber_2017}.

\subsection{Vanishing/exploding gradients}\label{sec:VanExpGradient}

An obstacle that is commonly faced when training DNNs using gradient based optimization methods, is the problem of exploding/vanishing gradients.
Gradient descent methods update the vector ${\boldsymbol{\theta}}$ as
\begin{equation}
\boldsymbol{\theta}^{(k+1)} = \boldsymbol{\theta}^{(k)} - \gamma \cdot \nabla_{\boldsymbol{\theta}}
\mathcal{L}
\end{equation}
where $\gamma >0$ is the optimization step size.
In particular, by using the chain rule, the gradient of the loss function w.r.t. the parameter $i$ of layer $j$ can be obtained as
\begin{equation}\label{eq:parameterGradient}
\frac{\partial \mathcal{L}}{\partial \theta_{i,j}} = \frac{\partial {\bf y}_{j+1}}{\partial \theta_{i,j}} \frac{\partial \mathcal{L}}{\partial {\bf y}_{j+1}}  =  \frac{\partial {\bf y}_{j+1}}{\partial \theta_{i,j}} 
\left( \prod_{l=j+1}^{N-1} \frac{\partial {\bf y}_{l+1}}{\partial {\bf y}_l} \right)
\frac{\partial \mathcal{L}}{\partial {\bf y}_{N}}.
\end{equation}
The problem of vanishing/exploding gradients is commonly related to the layer gradient magnitudes $\left\| \frac{\partial {\bf y}_{l+1}}{\partial {\bf y}_l}\right\|_2$, $l=0, \ldots, N-1$.
If these terms are all very small, since $\left\| \prod_{l={j+1}}^{N-1} \frac{\partial {\bf y}_{l+1}}{\partial {\bf y}_l}\right\|_2 \le \prod_{l=j+1}^{N-1} \left\|\frac{\partial {\bf y}_{l+1}}{\partial {\bf y}_l} \right\|_2$, the gradient $\frac{\partial \mathcal{L}}{\partial \theta_{i,j}}$ vanishes, and the training stops.
Vice versa, if these terms are very large, $\frac{\partial \mathcal{L}}{\partial \theta_{i,j}}$ becomes very sensitive to perturbations in the vectors $\frac{\partial {\bf y}_{j+1}}{\partial \theta_{i,j}}$ and $\frac{\partial \mathcal{L}}{\partial {\bf y}_N}$, and this can make the learning process unstable or cause overflow issues.
Both problems are generally exacerbated when the number of layers $N$ is large~\cite{GoodBengCour2016}.

For analysing the phenomenon of exploding/vanishing gradients in the context of \textit{recurrent} neural networks, the authors of \cite{Chang19} consider the system~\eqref{eq:firstorderODE} and show that the matrix $\boldsymbol{\phi}(t, 0)=\frac{\partial {\bf y}(t)}{\partial {\bf y}(0)}\in \mathbb{R}^{n\times n}$ obeys the linear time-varying dynamics
\begin{align}
\label{eq:forwardGradientDynamics}
\dot{\boldsymbol{\phi}}(t,0)=\boldsymbol{\mathcal{J}}(t) \boldsymbol{\phi}(t,0), \quad \boldsymbol{\phi}(0,0)={\bf I},
\end{align}
where $\boldsymbol{\mathcal{J}}(t)=\frac{\partial^\top {\bf f}({\bf y},t)}{\partial {\bf y}(t)}$.
In particular, $\boldsymbol{\phi}(t,0)$ can be seen as the continuous-time counterpart of the neural network gradient $\frac{\partial {\bf y}_k}{\partial {\bf y}_0}$, which is similar to the terms appearing in~\eqref{eq:parameterGradient}.
For the sake of simplicity, let us consider the 
simpler case
where $\boldsymbol{\mathcal{J}}(t)=\boldsymbol{\mathcal{J}}$ is time-invariant. 
The properties that $\left\| \boldsymbol{\phi}(t,0)\right\|_2$ neither diverges nor vanishes as $t\rightarrow +\infty$ corresponds to the marginal stability of~\eqref{eq:forwardGradientDynamics}, which is equivalent to requiring that each eigenvalue of $\boldsymbol{\mathcal{J}}$ has zero real part and its geometric and algebraic multiplicity coincide (see, e.g.~\cite{Khalil_NLsys}).
Under suitable assumptions, similar conditions can be also reached if $\boldsymbol{\mathcal{J}}(t)$ varies slowly enough in time~\cite{AscherMetNum}.

\subsection{Anti-symmetric and Hamiltonian-inspired DNNs} \label{sec:existingNets}
Motivated by the goal of mitigating the problem of vanishing/exploding gradients, as well as of having a marginally stable forward dynamics~\eqref{eq:firstorderODE}\footnote{As shown in~\cite{Haber_2017}, this property guarantees reduced sensitivity to perturbations and adversarial attacks on the input features.}, various DNN architectures have been proposed.
They are summarized below, where matrices $\bf K$ and $\bf b$ denote the trainable parameters and $\sigma(\cdot): \mathbb{R}\rightarrow \mathbb{R}$ is an activation function applied element wise to a vector argument.
The network structures are called MS$_i$-DNN ($i=1,2,3$) 
and, for each of them, the underlying ODE as well as the discretization method used are specified.

\begin{itemize}
	\item MS$_1$-DNN \cite{Haber_2017}
	\begin{itemize}
		\small
		\item Layer equation:
		${\bf z}_{j+1} = {\bf z}_j - h \sigma({\bf K}_{j,0}^\top{\bf y}_j + {\bf b}_{j,1})$ and 
		${\bf y}_{j+1} = {\bf y}_j + h \sigma({\bf K}_{j,0}{\bf z}_{j+1} + {\bf b}_{j,2})$ 
		\item Underlying ODE:                
		$\begin{bsmallmatrix}
		\dot{\bf y} \\
		\dot{\bf z}
		\end{bsmallmatrix}
		(t)
		=
		\sigma \left(
		\begin{bsmallmatrix}
		\bf 0 & {\bf K}_0(t)\\
		-{\bf K}_0^\top(t) & \bf 0
		\end{bsmallmatrix}
		\begin{bsmallmatrix}
		{\bf y} \\
		{\bf z} 
		\end{bsmallmatrix}
		(t)
		+\begin{bsmallmatrix}
		{\bf b}_1 \\
		{\bf b}_2 
		\end{bsmallmatrix}
		(t) \right)$
		
		\item Discretization method: Verlet 
	\end{itemize}
	
	\item MS$_2$-DNN \cite{Haber_2017,Chang19}
	\begin{itemize}
		\small
		\item Layer equation:
		${\bf y}_{j+1} = {\bf y}_j + h \sigma({\bf K}_j{\bf y}_{j} + {\bf b}_{j})$ 
		where all ${\bf K}_j$ matrices are skew-symmetric
		
		\item Underlying ODE: 
		$\dot{\bf y}(t) = \sigma({\bf K} (t) {\bf y}(t) + {\bf b} (t))$
		where ${\bf K}(t)$ are skew-symmetric $\forall t\ge 0$
		
		\item Discretization method: forward Euler
	\end{itemize}
	
	\item MS$_3$-DNN \cite{Chang18a}
	\begin{itemize}
		\small
		\item Layer equation:
		${\bf y}_{j+1} = {\bf y}_j + h {\bf K}_{j,1}^\top \sigma({\bf K}_{j,1} {\bf z}_j + {\bf b}_{j,1})$
		and 
		${\bf z}_{j+1} = {\bf z}_j - h {\bf K}_{j,2}^\top \sigma({\bf K}_{j,2} {\bf y}_{j+1} + {\bf b}_{j,2})$
		\item Underlying ODE: 
		$\begin{bsmallmatrix}
		\dot{\bf y} \\
		\dot{\bf z}
		\end{bsmallmatrix}
		(t)
		=
		\begin{bsmallmatrix}
		{\bf K}_1^\top & \bf 0\\
		\bf 0& -{\bf K}_2^\top
		\end{bsmallmatrix} (t)\,
		\sigma \left(
		\begin{bsmallmatrix}
		\bf 0 & {\bf K}_1\\
		{\bf K}_2 & \bf 0
		\end{bsmallmatrix} (t)
		\begin{bsmallmatrix}
		{\bf y} \\
		{\bf z} 
		\end{bsmallmatrix}
		(t)
		+\begin{bsmallmatrix}
		{\bf b}_1 \\
		{\bf b}_2 
		\end{bsmallmatrix}
		(t) \right)$
		
		\item Discretization method: Verlet
	\end{itemize}
\end{itemize}

It is worth to remark that these networks can achieve very good performance on different classification problems including benchmark problems in image classification such as MNIST and CIFAR 10~\cite{Haber_2017,Chang18a,Chang19}.
We highlight that models MS$_1$  and MS$_3$ have been called \textit{Hamiltonian-inspired} in view of their similarity with Hamiltonian models (compare, e.g. the underlying ODE of MS$_1$-DNN and~\eqref{eq:simpleHamiltonianSystems}).
However,~\cite{Haber_2017, Chang18a} do not provide a precise Hamiltonian function for the corresponding ODEs.

\section{Hamiltonian neural networks (H-DNNs)} \label{sec:methods}
In this section, we consider a general class of continuous time Hamiltonian system that we utilize for defining new DNN architectures. We also show that, under weak assumptions, MS$_i$-DNN models, $i=1,2,3$, can be obtained as special cases of the proposed networks. Finally, by assuming constant weights, we analyse marginal stability of the backward gradient dynamics, and provide arguments for supporting the claim that vanishing/exploding gradients are not expected during training.



\subsection{Hamiltonian dynamics}

We consider time-varying Hamiltonian systems \cite{Guo2006} defined by the ODE
\begin{equation}
{\bf \dot y}(t) = {\bf J}({\bf y},t)  \frac{\partial H({\bf y},t)}{\partial {\bf y}} 
\label{eq:TV_HS}
\end{equation}
where the interconnection matrix ${\bf J}({\bf y},t) \in \mathbb{R}^{n\times n}$ is skew-symmetric i.e. ${\bf J}({\bf y},t) = -{\bf J}^\top({\bf y},t)$ $\forall t\geq 0$, and $H({\bf y}(t),t) \in \mathbb{R}$ is the Hamiltonian function. Both $\bf J$ and $H$ are assumed to be smooth functions of all their arguments.

The more common notion of a time-invariant Hamiltonian system \cite{vanderSchaft2017} can be recovered when $\bf J$ and $H$ do not depend upon time.
Time-invariant Hamiltonian systems are marginally stable by construction when $H({\bf y})$ is a positive definite function \cite{Khalil_NLsys}. Therefore, as discussed in Section~\ref{sec:VanExpGradient}, they are a good candidate for defining well-posed DNNs. The same is true for the time-varying model \eqref{eq:TV_HS}, provided that the Hamiltonian changes slowly enough over time.

In the sequel, we focus on the following energy function
\begin{equation}
H({\bf y}(t),t) = \left[ \log(\cosh({\bf K}(t) {\bf y}(t) + {\bf b}(t))) \right]^\top \boldsymbol{1}
\label{eq:nlH}
\end{equation}
where $\log(\cdot)$ and $\cosh(\cdot)$ are applied element-wise, and $\boldsymbol{1} = [1, \dots , 1]^\top$. 
We obtain
\begin{align}
\frac{\partial H({\bf y}(t),t)}{\partial {\bf y}(t)} =& \frac{\partial ({\bf K}(t) {\bf y}(t) + {\bf b}(t))}{\partial {\bf y}(t)}  \frac{\partial H({\bf y}(t),t)}{\partial ({\bf K}(t) {\bf y}(t) + {\bf b}(t))} 
= {\bf K}^\top(t) \tanh({\bf K}(t) {\bf y}(t) + {\bf b}(t)) \label{eq:partialH}
\end{align}
where $\tanh(\cdot)$ is applied element-wise. 
Hence, system~\eqref{eq:TV_HS} becomes
\begin{equation}
\dot{\bf y}(t) = {\bf J}({\bf y},t) {\bf K}^\top(t) \tanh( {\bf K}(t) {\bf y}(t) + {\bf b}(t) ) .
\label{eq:ODE_H}
\end{equation}

\subsection{H-DNNs: relations with existing networks and new architectures}\label{sec:H-DNN_architectures}

We show that the underlying ODEs of the MS$_i$-DNNs (see Section~\ref{sec:existingNets}) are particular instances of \eqref{eq:ODE_H} when $\sigma(\cdot)=\tanh(\cdot)$ and
\begin{itemize}
	\item 	for MS$_1$-DNN,
	${\bf K}(t) = 
	\begin{bsmallmatrix}
	\bf 0 & {\bf K}_0(t)\\
	-{\bf K}_0^\top(t) & \bf 0
	\end{bsmallmatrix}
	$
	is invertible $\forall t \geq 0$
	and
	${\bf J}({\bf y},t)  {\bf K}^\top(t) = {\bf I}$,
	\item	for MS$_2$-DNN,
	${\bf K}(t) = 
	- {\bf K}^\top(t)$
	is invertible
	$\forall t \geq 0 $
	and
	${\bf J}({\bf y},t)  {\bf K}^\top(t) = {\bf I}$,
	
	\item	for MS$_3$-DNN,
	${\bf K}(t) = 
	\begin{bsmallmatrix}
	\bf 0 & {\bf K}_1(t)\\
	{\bf K}_2(t) & \bf 0
	\end{bsmallmatrix}$
	and
	${\bf J}({\bf y},t)  =
	\begin{bsmallmatrix}
	\bf 0 & {\bf I}\\
	-{\bf I} & \bf 0
	\end{bsmallmatrix}$.
\end{itemize}

A necessary condition for the skew-symmetric $n \times n$ matrix ${\bf K}(t)$ to be invertible is that the size $n$ of input features is even\footnote{For a $n\times n$ skew-symmetric matrix ${\bf A}$ we have, $\det({\bf A}) = \det({\bf A}^\top) = \det({\bf A}^{-1}) = (-1)^n \det({\bf A})$. If $n$ is odd, then $\det({\bf A}) = - \det({\bf A}) = 0$. Thus, ${\bf A}$ is not invertible.}.
If $n$ is odd, however, one can perform input-feature augmentation by adding an extra state initialized at zero to satisfy the previous condition \cite{dupont2019augmented}.

Next, we introduce two new architectures (called H$_i$-DNN, $i=1,2$) stemming from \eqref{eq:ODE_H} when ${\bf J}({\bf y},t)$ is constant and forward Euler discretization with step $h>0$ is applied.
The resulting layer equations are
\begin{equation}
{\bf y}_{j+1} = {\bf y}_j + h\, {\bf J} {\bf K}^\top_j \tanh({\bf K}_j {\bf y}_j +{\bf b}_j) \qquad j=0,1,\cdots, N-1
\label{eq:H-DNN}
\end{equation}
where we set 
${\bf J}({\bf y},t)  =
\begin{bsmallmatrix}
\bf 0 & {\bf I}\\
-{\bf I} & \bf 0
\end{bsmallmatrix}$ for H$_1$-DNN,
and ${\bf J}({\bf y},t) =
\begin{bsmallmatrix}
0  & 1 & \dots & 1\\
-1 & 0 & \dots & 1\\
\vdots & \vdots & \ddots & \vdots \\
-1 & -1 & \dots & 0\\
\end{bsmallmatrix}$ for H$_2$-DNN.

In spite of the specific choices of $\bf J$, both DNNs contain more trainable parameters in each layer than MS$_i$-DNN, $i=1,2,3$. In this sense, they are more expressive than MS$_i$ architectures and, as shown later in Section \ref{sec:simple_examples}, one can use less layers while obtaining similar prediction accuracy.

\begin{remark}
	Although it is not guaranteed that Euler discretization preserves marginal stability of the continuous-time dynamics, it leads to simpler layer equations compared to more sophisticated discretization approaches, and can achieve good performance on benchmark examples (see Section \ref{sec:simple_examples}).
	Moreover the discretization accuracy can be controlled through $T$ and $h$. 
	These features may be attractive to practitioners. 
\end{remark}


\subsection{Training algorithm}\label{sec:regularization}

For all DNN architectures introduced in Section~\ref{sec:H-DNN_architectures}, we consider multicategory classification problems where $M$ is the number of classes, and the input features and their corresponding true labels are $({\bf y}_0^k,c^k), k=1,\dots,s, c^k\in\{0,\dots,M-1\}.$ The networks are trained by solving the optimization problem
\begin{align}
\min_{\boldsymbol{\theta}} & \qquad \frac{1}{s} \sum_{k=1}^s \mathcal{L}({\bf f}_N({\bf y}^k_{N}), c^k) + \alpha_c \, R_N(\boldsymbol{\theta}_N) + \alpha \, R({\bf K}_{0,\dots,N-1}, {\bf b}_{0,\dots,N-1}) \nonumber\\
\text{s.t.}& \qquad {\bf y}^k_{j+1} = {\bf y}^k_j + h\, {\bf J}_j({\bf y}^k_j) {\bf K}^\top_j \tanh({\bf K}_j {\bf y}^k_j +{\bf b}_j), \quad j=0,1,\dots,N-1
\label{eq:minimization}
\end{align}
where $R_N(\cdot)$ is the $L_2$ regularization term of the output layer\footnote{For a two class classification problem, it is given by $R_N(\cdot) = \left\| {\bf W} \right\|^2 + \mu^2$. For multicategory problems, we refer the reader to \cite{Haber_2017}.}
and $R({\cdot})$ is the regularization term of layers $0,\dots, N-1$.
The output layer is problem dependent, e.g. for a two class classification problem, it is usually given by $f_N({\bf y}_N, \boldsymbol{\theta}_N) = \sigma_c ({\bf Wy}_N + {\mu})$ where ${\bf W}\in \mathbb{R}^{1\times n}$, $\mu \in \mathbb{R}$, $\boldsymbol{\theta}_N=({\bf W}, \mu)$ and $\sigma_c(x) = \frac{1}{1+e^{-x}}$ is the logistic function.
The minimization is done over $\boldsymbol{\theta}$, i.e., all the trainable parameters that define the network $\{{\bf K}_{0,\dots,N-1}, {\bf b}_{0,\dots,N-1}, {\bf W}, \mu\}$.

Following the work in \cite{Haber_2017} and \cite{Chang18a}, we define the regularization term for the H-DNNs as $R = R_K({\bf K}_{0,\dots,N-1}) + R_b({\bf b}_{0,\dots,N-1})$, where
\begin{equation}
R_K({\bf K}_{0,\dots,N-1}) = \frac{h}{2}\sum_{j=1}^{N-1} \left\| {\bf K}_j-{\bf K}_{j-1} \right\|^2_F \quad \text{and}\quad R_b({\bf b}_{0,\dots,N-1}) = \frac{h}{2}\sum_{j=1}^{N-1} \left\| {\bf b}_j-{\bf b}_{j-1} \right\|^2
\label{eq:ruthottoReg}
\end{equation}
so as to favour weights that vary smoothly between adjacent layers. The coefficients $\alpha \geq 0$ and $\alpha_c \geq 0$\footnote{$\alpha_c$ is usually called \textit{weight decay}.} are hyperparameters that represent the trade-off between fitting and regularization.

\subsection{Stability of the backward gradient dynamics} \label{sec:eigenvalues}

As discussed in Section~\ref{sec:VanExpGradient}, to avoid vanishing/exploding gradients, we would like to ensure that the following terms are not vanishing nor exploding
\begin{equation}
\left( \prod_{l=j+1}^{N-1} \frac{\partial {\bf y}_{l+1}}{\partial {\bf y}_l} \right) = \frac{\partial {\bf y}_N}{\partial {\bf y}_{j+1}} \text{ for } j=N-2, \dots, 0.
\label{eq:norm_grad}
\end{equation}
This analysis can also be tackled from the continuous-time perspective by considering \eqref{eq:firstorderODE} and noting that \eqref{eq:norm_grad} corresponds to $\frac{\partial {\bf y}(T)}{\partial {\bf y}(T-t)}$, where $t = h(j+1)$, $T=h(N-1)$ and $h$ is the step size.

We call the evolution of $\boldsymbol{\phi}(T,T-t) \triangleq \frac{\partial {\bf y}(T)}{\partial {\bf y}(T-t)}$ the \textit{backward gradient dynamics} because $T-t$ decreases from $T$ to zero as $t$ increases. This term is different from the one considered in \eqref{eq:forwardGradientDynamics} which captures the sensitivity to input features and not the evolution across layers of gradients appearing in backpropagation.

Our next goal is to obtain the dynamics of $\boldsymbol{\phi}(T,T-t)$ for the Hamiltonian model \eqref{eq:ODE_H}. We start from the simple case where the parameters of \eqref{eq:ODE_H} are constant, i.e. ${\bf J}({\bf y}(t),t) = {\bf J}$, ${\bf K}(t) = {\bf K}$, ${\bf b}(t) = {\bf b}$, and call the corresponding networks \textit{time-invariant H-DNNs}. The following Lemma, whose proof can be found in Appendix~\ref{ap:lemma_grad}, provides the desired model.



\begin{lemma}\label{lem:grad_dynamics}
	Given the time-invariant ODE $\dot{\bf y}(t) = {\bf f}({\bf y}(t))$, the time evolution of $\boldsymbol{\phi}(T,T-t)$ is given by 
	\begin{equation}
	\frac{d}{d t} \boldsymbol{\phi}(T,T-t) = \left. \frac{\partial {\bf f}}{\partial {\bf y}} \right\rvert_{{\bf y}(T-t)} \boldsymbol{\phi}(T,T-t)\,, \quad \boldsymbol{\phi}(T,T) = {\bf I}.
	\label{eq:BackwardGradientDynamics}
	\end{equation}
\end{lemma}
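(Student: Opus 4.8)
The plan is to recognize $\boldsymbol{\phi}(T,T-t)$ as the sensitivity matrix of the flow of $\dot{\bf y}={\bf f}({\bf y})$ and to obtain its dynamics by combining the \emph{forward} variational equation with the semigroup (cocycle) property of state-transition matrices, followed by the reverse-time change of variables $s=T-t$. Concretely, I would write $\boldsymbol{\phi}(t_2,t_1)=\frac{\partial {\bf y}(t_2)}{\partial {\bf y}(t_1)}$ for general arguments, so that the object of interest is $\boldsymbol{\phi}(T,s)$ viewed as a function of its \emph{second} argument $s$, evaluated along $s=T-t$.

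First I would recall the forward variational equation already used in~\eqref{eq:forwardGradientDynamics}: differentiating $\dot{\bf y}={\bf f}({\bf y})$ with respect to the state at time $t_1$ shows that, as a function of its first argument, $\boldsymbol{\phi}(t_2,t_1)$ solves the linear matrix ODE driven by the Jacobian of ${\bf f}$ evaluated along the trajectory, with $\boldsymbol{\phi}(t_1,t_1)={\bf I}$. Since ${\bf f}$ is smooth, this fundamental matrix exists and is invertible for all arguments, and it satisfies the cocycle identity
\[
\boldsymbol{\phi}(T,0)=\boldsymbol{\phi}(T,s)\,\boldsymbol{\phi}(s,0),\qquad \boldsymbol{\phi}(s,0)^{-1}=\boldsymbol{\phi}(0,s).
\]

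Next I would differentiate this identity with respect to $s$ (with $T$ and the base time $0$ held fixed). The left-hand side is independent of $s$, so the product rule gives $0=\bigl[\partial_s\boldsymbol{\phi}(T,s)\bigr]\boldsymbol{\phi}(s,0)+\boldsymbol{\phi}(T,s)\bigl[\partial_s\boldsymbol{\phi}(s,0)\bigr]$, where $\partial_s\boldsymbol{\phi}(s,0)$ is governed by the forward variational equation (here $s$ is the first argument). Substituting this and right-multiplying by the invertible $\boldsymbol{\phi}(s,0)^{-1}$ isolates the dynamics in the second argument, of the form $\partial_s\boldsymbol{\phi}(T,s)=-\boldsymbol{\phi}(T,s)\,\bigl.\tfrac{\partial {\bf f}}{\partial {\bf y}}\bigr|_{{\bf y}(s)}$. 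Setting $s=T-t$ and applying the chain rule contributes a factor $\tfrac{ds}{dt}=-1$, which cancels the minus sign above and, after accounting for the column-gradient convention (which transposes both the Jacobian and the sensitivity matrix, moving the Jacobian to the left), yields exactly~\eqref{eq:BackwardGradientDynamics}; the initial condition $\boldsymbol{\phi}(T,T)={\bf I}$ follows by evaluating at $t=0$.

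The main obstacle I anticipate is bookkeeping rather than analysis: one must track the transpose induced by the column convention for gradients together with the sign produced by the reverse-time substitution, so that the Jacobian ends up \emph{left}-multiplying $\boldsymbol{\phi}$ (as stated) and the two minus signs cancel. A minor point is justifying the smoothness and invertibility of $\boldsymbol{\phi}$, which is immediate from the smoothness of ${\bf f}$ and the standard fact that state-transition matrices of linear ODEs are nonsingular; this legitimizes differentiating the cocycle identity and right-multiplying by $\boldsymbol{\phi}(s,0)^{-1}$.
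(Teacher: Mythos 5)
Your proof is correct, but it takes a genuinely different route from the paper's. You treat $\boldsymbol{\phi}$ as a state-transition matrix and obtain its dependence on the \emph{second} argument by differentiating the cocycle identity $\boldsymbol{\phi}(T,0)=\boldsymbol{\phi}(T,s)\,\boldsymbol{\phi}(s,0)$, invoking the forward variational equation~\eqref{eq:forwardGradientDynamics} for $\partial_s\boldsymbol{\phi}(s,0)$ together with nonsingularity of $\boldsymbol{\phi}(s,0)$, and then substituting $s=T-t$; your bookkeeping is exactly right — the factor $ds/dt=-1$ cancels the minus sign in $\partial_s\boldsymbol{\phi}(T,s)=-\boldsymbol{\phi}(T,s)\,\partial{\bf f}/\partial{\bf y}|_{{\bf y}(s)}$, and the column-gradient convention transposes the result so the Jacobian left-multiplies, matching~\eqref{eq:BackwardGradientDynamics}. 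This is the classical ``adjoint-style'' derivation of how a transition matrix varies with its initial time. The paper argues instead from first principles: it writes the flow in integral form, ${\bf y}(T)={\bf y}(T-t)+\int_0^t {\bf f}({\bf y}(s+T-t))\,ds$, differentiates under the integral sign, composes over a small increment $[T-t-\delta,\,T-t]$ using the chain rule, and obtains the ODE as the limit of a difference quotient. The trade-off: your argument is shorter and more modular given standard linear-systems facts, and it makes transparent both why the Jacobian is evaluated at ${\bf y}(T-t)$ and why no minus sign survives; however, it leans on the forward variational equation (which the paper states only by citing prior work) and on invertibility and joint smoothness of the transition matrix — points you correctly flag, and which are standard (Liouville's formula gives nonsingularity). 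The paper's derivation needs neither prerequisite — in particular it never inverts $\boldsymbol{\phi}$ — at the cost of a more laborious incremental limit, and it keeps the column convention consistent throughout rather than transposing at the end.
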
 
Since in our case ${\bf f}({\bf y}(t)) = {\bf J} {\bf K}^\top \tanh( {\bf K} {\bf y}(t) + {\bf b} ) $, we have
\begin{align}
\frac{\partial {\bf f}}{\partial {\bf y}} &=
\frac{\partial}{\partial {\bf y}} \left(  {\bf K}^\top \tanh({\bf K} {\bf y} +{\bf b})\right) {\bf J}^\top= 
\frac{\partial}{\partial {\bf y}} \left( \tanh({\bf K} {\bf y} +{\bf b})\right) {\bf K}{\bf J}^\top \nonumber \\
&= {\bf K}^\top \text{diag}\left(\tanh'({\bf K} {\bf y} +{\bf b})\right) {\bf K}{\bf J}^\top = {\bf K}^\top {\bf D}({\bf y}) {\bf K}{\bf J}^\top
\label{eq:Jacobian}
\end{align}
where ${\bf D}({\bf y}) = \text{diag}\left(\tanh'({\bf K} {\bf y} +{\bf b})\right)$ and $\tanh'(\cdot)$ computes element-wise the derivative of $\tanh(\cdot)$. 

The next two lemmas, proved in Appendix~\ref{ap:lemma_eig} and \ref{ap:lemma_multiplicity}, show that the Jacobian matrix~\eqref{eq:Jacobian} satisfies the conditions for marginal stability.

\begin{lemma}\label{lem:eig_imag}
	The eigenvalues of $\frac{\partial {\bf f}}{\partial {\bf y}}$ are purely imaginary.
\end{lemma}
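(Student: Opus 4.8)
The plan is to exploit the factored structure of the Jacobian recorded in \eqref{eq:Jacobian}, namely $\frac{\partial {\bf f}}{\partial {\bf y}} = {\bf K}^\top {\bf D}({\bf y}) {\bf K}\,{\bf J}^\top$, which is the product of the \emph{symmetric} matrix ${\bf A} := {\bf K}^\top {\bf D}({\bf y}) {\bf K}$ and the \emph{skew-symmetric} matrix ${\bf J}^\top = -{\bf J}$. The underlying classical fact is that the product of a positive semidefinite symmetric matrix and a skew-symmetric matrix has purely imaginary spectrum, and the goal is to instantiate it here. First I would record that $\tanh'(x) = 1 - \tanh^2(x) > 0$ for every $x$, so each diagonal entry of ${\bf D}({\bf y}) = \text{diag}(\tanh'({\bf K}{\bf y}+{\bf b}))$ is strictly positive; hence ${\bf D}({\bf y}) \succ 0$ and its symmetric square root ${\bf D}^{1/2}$ is well defined. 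Setting ${\bf L} := {\bf D}^{1/2}{\bf K}$, I can write ${\bf A} = {\bf L}^\top {\bf L}$, so that $\frac{\partial {\bf f}}{\partial {\bf y}} = {\bf L}^\top ({\bf L}\,{\bf J}^\top)$.

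Next I would invoke the elementary fact that, for any conformable matrices, the products ${\bf X}{\bf Y}$ and ${\bf Y}{\bf X}$ share the same nonzero eigenvalues (with multiplicities), even when the two products have different sizes. Applying this with ${\bf X} = {\bf L}^\top$ and ${\bf Y} = {\bf L}\,{\bf J}^\top$, the nonzero eigenvalues of $\frac{\partial {\bf f}}{\partial {\bf y}} = {\bf L}^\top ({\bf L}\,{\bf J}^\top)$ coincide with those of $({\bf L}\,{\bf J}^\top){\bf L}^\top = {\bf L}\,{\bf J}^\top {\bf L}^\top$. The key observation is that this last matrix is skew-symmetric: $({\bf L}\,{\bf J}^\top {\bf L}^\top)^\top = {\bf L}\,({\bf J}^\top)^\top {\bf L}^\top = -{\bf L}\,{\bf J}^\top {\bf L}^\top$, using ${\bf J}^\top = -{\bf J}$. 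Since a real skew-symmetric matrix has purely imaginary eigenvalues, every nonzero eigenvalue of the Jacobian is purely imaginary; the zero eigenvalues are purely imaginary by definition. This establishes the claim.

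If one assumes in addition that ${\bf K}$ is invertible — the standing assumption for the MS$_1$ and MS$_2$ instances — a shorter route is available: then ${\bf A} \succ 0$, the matrix ${\bf A}^{1/2}$ is invertible, and $\frac{\partial {\bf f}}{\partial {\bf y}} = {\bf A}\,{\bf J}^\top$ is \emph{similar} to ${\bf A}^{-1/2}({\bf A}\,{\bf J}^\top){\bf A}^{1/2} = {\bf A}^{1/2}{\bf J}^\top {\bf A}^{1/2}$, which is again skew-symmetric and hence has purely imaginary eigenvalues; similar matrices share their spectra, so the conclusion follows. I would nonetheless prefer the factorization argument of the previous paragraph, since it requires neither ${\bf K}$ nor ${\bf A}$ to be invertible and remains valid for rectangular ${\bf K}$. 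Indeed, the only genuine obstacle is precisely the singular / rank-deficient case, where the direct similarity transform breaks down because ${\bf A}^{-1/2}$ fails to exist; the common-nonzero-eigenvalue step sidesteps this by passing to the manifestly skew-symmetric ${\bf L}\,{\bf J}^\top {\bf L}^\top$ and handling the zero eigenvalues separately. Everything else reduces to the routine verifications that ${\bf D}({\bf y}) \succ 0$ and that transposition reverses products.
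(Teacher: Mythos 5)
Your proof is correct, and while it shares its skeleton with the paper's argument---both hinge on the fact that ${\bf X}{\bf Y}$ and ${\bf Y}{\bf X}$ have the same nonzero eigenvalues, used to commute the factors of the Jacobian into a form with known spectrum---the finishing move is genuinely different. The paper swaps ${\bf K}^\top{\bf D}{\bf K}{\bf J}^\top$ into ${\bf K}{\bf J}^\top{\bf K}^\top{\bf D}$, i.e.\ a skew-symmetric matrix times the positive definite diagonal ${\bf D}$, and then cites the appendix of \cite{Chang18a} for the nontrivial fact that such a product has purely imaginary spectrum. You instead split ${\bf D}={\bf D}^{1/2}{\bf D}^{1/2}$, set ${\bf L}={\bf D}^{1/2}{\bf K}$, and swap ${\bf L}^\top({\bf L}{\bf J}^\top)$ into ${\bf L}{\bf J}^\top{\bf L}^\top$, which is \emph{manifestly} skew-symmetric, so the only spectral input you need is the elementary fact that real skew-symmetric matrices have imaginary eigenvalues---in effect you have inlined the proof of the result the paper outsources to \cite{Chang18a}. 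This buys two things: the argument is self-contained, and, because you claim equality only of \emph{nonzero} eigenvalues and dispose of the zero ones separately, it remains valid verbatim when ${\bf K}$ is singular, rank-deficient, or even rectangular, a regime where the paper's unqualified statement $\text{eig}\left({\bf K}^\top{\bf D}{\bf K}{\bf J}^\top\right)=\text{eig}\left({\bf K}{\bf J}^\top{\bf K}^\top{\bf D}\right)$ would need a caveat (for rectangular ${\bf K}$ the two products do not even have the same size). Your alternative similarity argument via ${\bf A}^{1/2}{\bf J}^\top{\bf A}^{1/2}$ is also sound, and you correctly flag that it requires invertibility, which is exactly why the factorization route is preferable.
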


\begin{lemma}\label{lem:multiplicity}
	The algebraic and geometric multiplicity of each eigenvalue of $\frac{\partial {\bf f}}{\partial {\bf y}}$ do coincide.
\end{lemma}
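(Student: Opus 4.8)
The plan is to exhibit a similarity transformation carrying the Jacobian $\frac{\partial {\bf f}}{\partial {\bf y}} = {\bf K}^\top {\bf D}({\bf y}) {\bf K} {\bf J}^\top$ to a real skew-symmetric matrix. Real skew-symmetric matrices are normal and therefore diagonalizable over $\mathbb{C}$, so for each of their eigenvalues the algebraic and geometric multiplicities automatically coincide; since similarity preserves both multiplicities, the claim for $\frac{\partial {\bf f}}{\partial {\bf y}}$ follows immediately.

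First I would record two structural facts. Since $\tanh'(x) = 1 - \tanh^2(x) \in (0,1]$ for every $x \in \mathbb{R}$, the diagonal matrix ${\bf D}({\bf y}) = \text{diag}(\tanh'({\bf K}{\bf y} + {\bf b}))$ has strictly positive diagonal entries, hence is symmetric positive definite. Consequently, assuming ${\bf K}$ invertible, the matrix ${\bf S} := {\bf K}^\top {\bf D}({\bf y}) {\bf K}$ is symmetric positive definite as well, because ${\bf v}^\top {\bf S} {\bf v} = ({\bf K}{\bf v})^\top {\bf D}({\bf y})({\bf K}{\bf v}) > 0$ for ${\bf v} \neq {\bf 0}$. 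I would then use the unique symmetric positive-definite square root ${\bf S}^{1/2}$ (itself invertible) and compute
\begin{equation*}
{\bf S}^{-1/2} \left(\frac{\partial {\bf f}}{\partial {\bf y}}\right) {\bf S}^{1/2} = {\bf S}^{-1/2} {\bf S} {\bf J}^\top {\bf S}^{1/2} = {\bf S}^{1/2} {\bf J}^\top {\bf S}^{1/2} =: \tilde{\bf A} .
\end{equation*}
The key step is to check that $\tilde{\bf A}$ is skew-symmetric: using that ${\bf S}^{1/2}$ is symmetric and ${\bf J}^\top = -{\bf J}$, one gets $\tilde{\bf A}^\top = {\bf S}^{1/2}({\bf J}^\top)^\top {\bf S}^{1/2} = -{\bf S}^{1/2} {\bf J}^\top {\bf S}^{1/2} = -\tilde{\bf A}$. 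Being real skew-symmetric, $\tilde{\bf A}$ is normal and hence diagonalizable, and since $\frac{\partial {\bf f}}{\partial {\bf y}}$ is similar to $\tilde{\bf A}$ it inherits coinciding algebraic and geometric multiplicities for every eigenvalue, which proves the lemma.

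The step I expect to be the main obstacle is the invertibility of ${\bf K}$: forming ${\bf S}^{1/2}$ and carrying out the similarity require ${\bf S}$ to be positive \emph{definite} rather than merely positive semidefinite. This hypothesis is essential and not cosmetic, because if ${\bf K}$ is rank-deficient the conclusion can genuinely fail at the zero eigenvalue; for instance ${\bf K} = \text{diag}(1,0)$, ${\bf D}={\bf I}$ and ${\bf J}^\top = \begin{bsmallmatrix} 0 & 1 \\ -1 & 0\end{bsmallmatrix}$ give $\frac{\partial {\bf f}}{\partial {\bf y}} = \begin{bsmallmatrix} 0 & 1 \\ 0 & 0\end{bsmallmatrix}$, a single nilpotent Jordan block whose algebraic and geometric multiplicities differ. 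It is worth noting that for the \emph{nonzero} eigenvalues no rank assumption is needed: writing $\frac{\partial{\bf f}}{\partial{\bf y}} = {\bf B}{\bf C}$ with ${\bf B} = {\bf K}^\top {\bf D}^{1/2}$ and ${\bf C} = {\bf D}^{1/2}{\bf K}{\bf J}^\top$, the matrix ${\bf C}{\bf B} = {\bf D}^{1/2}{\bf K}{\bf J}^\top{\bf K}^\top{\bf D}^{1/2}$ is skew-symmetric and shares the nonzero-eigenvalue Jordan structure of ${\bf B}{\bf C}$, so the delicate point is confined to the zero eigenvalue, which the invertibility of ${\bf K}$ removes.
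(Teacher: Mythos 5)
Your proof is correct, but it takes a genuinely different route from the paper's. The paper proceeds via an auxiliary lemma (for ${\bf P}$ positive definite and ${\bf Q}$ real skew-symmetric, ${\bf PQ}$ is diagonalizable), proved by contradiction: assuming a generalized eigenvector ${\bf z}$ exists alongside an eigenvector ${\bf x}$, it derives $-(\lambda^*+\lambda)\,{\bf x}^*{\bf P}^{-1}{\bf z} = {\bf x}^*{\bf P}^{-1}{\bf x} \neq 0$, which is impossible because $\lambda$ is purely imaginary --- a fact imported from Lemma~2 (itself resting on the argument of Chang et al.). Your argument replaces all of this with the classical congruence trick: conjugating $\frac{\partial {\bf f}}{\partial {\bf y}} = {\bf S}{\bf J}^\top$ by ${\bf S}^{1/2}$ produces the real skew-symmetric matrix ${\bf S}^{1/2}{\bf J}^\top{\bf S}^{1/2}$, which is normal, hence diagonalizable with purely imaginary spectrum. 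This buys two things the paper's route does not: it is self-contained, in that it re-derives Lemma~2 as a byproduct rather than depending on it, and it exposes exactly where the hypotheses enter. Your closing observations are a genuine addition as well: both proofs silently require ${\bf K}$ to be invertible (the paper simply asserts that ${\bf K}^\top{\bf D}({\bf y}){\bf K}$ is positive definite, which fails for rank-deficient ${\bf K}$), and your explicit counterexample shows the lemma is actually false at the zero eigenvalue without that assumption, while your ${\bf B}{\bf C}$ versus ${\bf C}{\bf B}$ remark correctly confines any possible failure to that eigenvalue. What the paper's approach buys in exchange is only elementarity: it avoids matrix square roots and uses nothing beyond eigenvector manipulations.
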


As shown in the corresponding proofs, Lemma~\ref{lem:eig_imag} hinges on results available in \cite{Chang19}. 
However, the multiplicity of the eigenvalues of $\frac{\partial \bf f}{\partial {\bf y}}$ (Lemma~\ref{lem:multiplicity}) has not been analysed in previous publications.


If ${\bf D}({\bf y}(t))$ is constant, Lemmas~\ref{lem:eig_imag} and \ref{lem:multiplicity} imply that $\boldsymbol{\phi}(T,T-t)$, neither diverges nor converges to zero, irrespectively of the weights $\bf J$, $\bf K$ and $\bf b$ and the final time $T$. This property, however, can be compromised by the time-varying nature of ${\bf D}({\bf y}(t))$ and the weights (${\bf K}(t)$ and ${\bf b}(t)$) as well as the time discretization process underlying \eqref{eq:norm_grad}. Nevertheless, Lemmas~\ref{lem:eig_imag} and \ref{lem:multiplicity} suggest that if ${\bf D}({\bf y}(t))$, ${\bf K}(t)$ and ${\bf b}(t)$ change slowly enough and $h$ is sufficiently small, the growth or decrease of the terms $\left\|\frac{\partial {\bf y}_{N}}{\partial {\bf y}_{j+1}} \right\|_2$ can be kept under control.


While we do not provide a complete theoretical analysis when ${\bf D}({\bf y}(t))$, ${\bf K}(t)$ and ${\bf b}(t)$ are time varying, we simply highlight that changes in the parameters ${\bf K}_j$ and ${\bf b}_j$ across the layers of H$_i$-DNN, $i=1,2$, can be controlled by suitably choosing the regularization parameter $\alpha$ in \eqref{eq:minimization}.
Moreover, in Section \ref{sec:gradient}, we provide a simulation study of the backward gradient dynamics confirming the absence of vanishing/exploding gradients when ${\bf K}_j$ and ${\bf b}_j$ are not constant.

\section{Numerical examples}\label{sec:examples}

\subsection{Binary classification examples}\label{sec:simple_examples}

We test MS- and H-DNNs introduced in Section~\ref{sec:methods} on two benchmark examples (the ``Swiss roll'' and the ``double moons'' datasets in Figures \ref{fig:examplesClassification_swissroll} and \ref{fig:examplesClassification_doublemoons}) concerning binary classification with features in $\mathbb{R}^2$.

As in \cite{Haber_2017}, we consider MS- and H-DNNs with augmented input features \cite{dupont2019augmented} so as to increase the modelling power.
More specifically, input feature vectors are given by $\begin{bsmallmatrix}({\bf y}_0^k)^\top &0 & 0 \end{bsmallmatrix}^\top \in \mathbb{R}^4$ where ${\bf y}_0^k \in \mathbb{R}^2, k=1,\dots,s$ are the input datapoints (see Figures \ref{fig:examplesClassification_swissroll} and \ref{fig:examplesClassification_doublemoons}).
We complement the DNNs with an output layer ${y}_{N+1} = f_N({\bf y}_N, \boldsymbol{\theta}_N)$ (see Section \ref{sec:regularization}). 
The optimization problem is solved using the Adam algorithm with minibatches (see Appendix~\ref{ap:implementation_2d} for details), and standard cross-entropy \cite{GoodBengCour2016} as the loss function $\mathcal{L}$ in \eqref{eq:minimization}.

\begin{table}
	\begin{center}
		\caption{Classification accuracies over test sets for different examples using different network structures with 4 neurons (nf) each layer. The first three columns represent the existing structures while in the two last columns we present the results for the new H-DNNs. The first two best accuracies in each row are in bold. Last row presents the number of parameters per layer of each network.}
		\label{tab:classif}
		\footnotesize
		\begin{tabular}{r|c|c|c|c||c|c}
			& & MS$_1$-DNN & MS$_2$-DNN & MS$_3$-DNN & H$_1$-DNN & H$_2$-DNN \\
			\hline
			Swiss       & 4 layers & 77.1\% & 79.7\% & 90.1\% & \textcolor{dark-gray}{\textbf{93.6\%}} & \textbf{98.9\% }\\ 
			roll           & 8 layers & 91.5 \% & 90.7\% & 87.0\% & \textcolor{dark-gray}{\textbf{99.0\%}} & \textbf{99.4\%} \\ 
			& 16 layers & 97.7\% & 99.7\% & 97.1\% & \textbf{99.8\%} & \textbf{99.8\%} \\ 
			& 32 layers & \textbf{100\%} & \textbf{100\%} & 98.4\% & \textcolor{dark-gray}{\textbf{99.8\%}} & 99.2\%\\  
			& 64 layers & \textbf{100\%} & \textbf{100\%} & \textbf{100\%} & \textcolor{dark-gray}{\textbf{99.8\%}} & \textbf{100\%} \\ 
			\hline
			Double & 1 layer       & 92.5\% & 91.3\% & 97.6\% & \textbf{100\%} & \textcolor{dark-gray}{\textbf{99.9\%}} \\ 
			moons & 2 layers      & 98.2\% & 94.9\% & \textcolor{dark-gray}{\textbf{99.8\%}} & \textbf{100\%} & \textbf{100\%} \\ 
			& 4 layers      & \textcolor{dark-gray}{\textbf{99.5\%}} & \textbf{100\%} & \textbf{100\%} & \textbf{100\%} & \textbf{100\%} \\ 
			\hline
			\hline
			\multicolumn{2}{c|}{\textit{\# parameters per layer}} & $\frac{\text{nf}^2}{4} + \text{nf}$ & $\frac{\text{nf}^2 + \text{nf}}{2}$ & $\frac{\text{nf}^2}{2} + \text{nf}$ & $\text{nf}^2 + \text{nf}$ & $\text{nf}^2 + \text{nf}$\\
			\hline
		\end{tabular}
	\end{center}
\end{table}

In Table~\ref{tab:classif}, we present the classification accuracies over test sets for the network structures in Section \ref{sec:methods} with different number of layers. 
It can be seen, for a fixed number of layers, that the performances of H$_1$-DNN and H$_2$-DNN are similar or better compared to the other networks. 
This can be motivated by the fact that, as discussed in Section~\ref{sec:H-DNN_architectures}, the new architectures are more expressive than MS$_i$-DNNs.
We indicate in the last row of Table~\ref{tab:classif}, the number of parameters per layer of each network. Note that networks with same number of parameters have similar performance.

The coloured regions in Figure~\ref{fig:examplesClassification} show the predictive power of an example network (H$_1$-DNN). It can be noticed that the datapoints do not lie close to the decision boundary, hence  confirming the robustness of classification against perturbation of input features.

\begin{figure}
	\centering
	\begin{subfigure}{.4\textwidth}
		\centering
		\includegraphics[width=\linewidth]{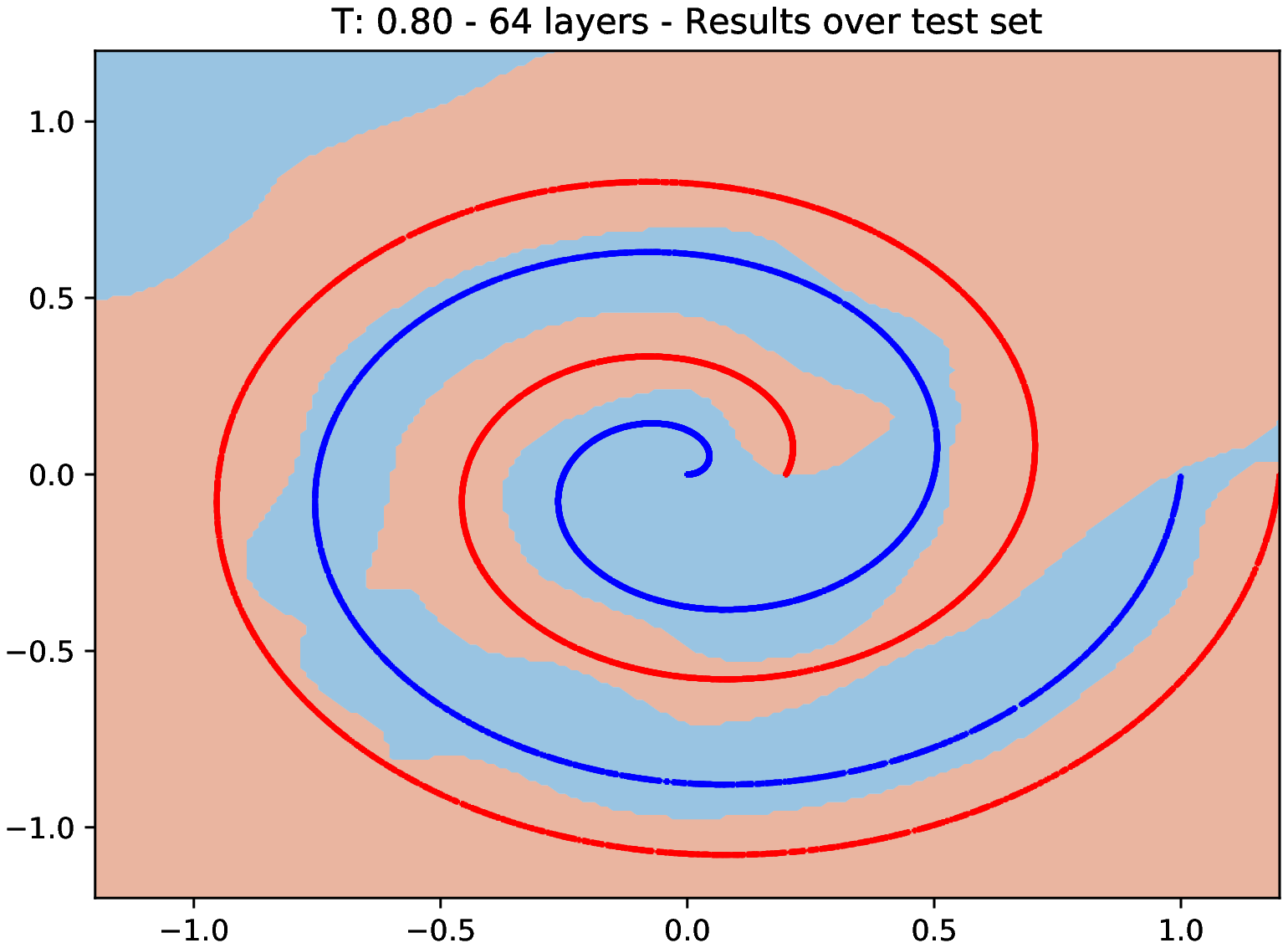}  
		\caption{}
		\label{fig:examplesClassification_swissroll}
	\end{subfigure}%
	\begin{subfigure}{.4\textwidth}
		\centering
		\includegraphics[width=\linewidth]{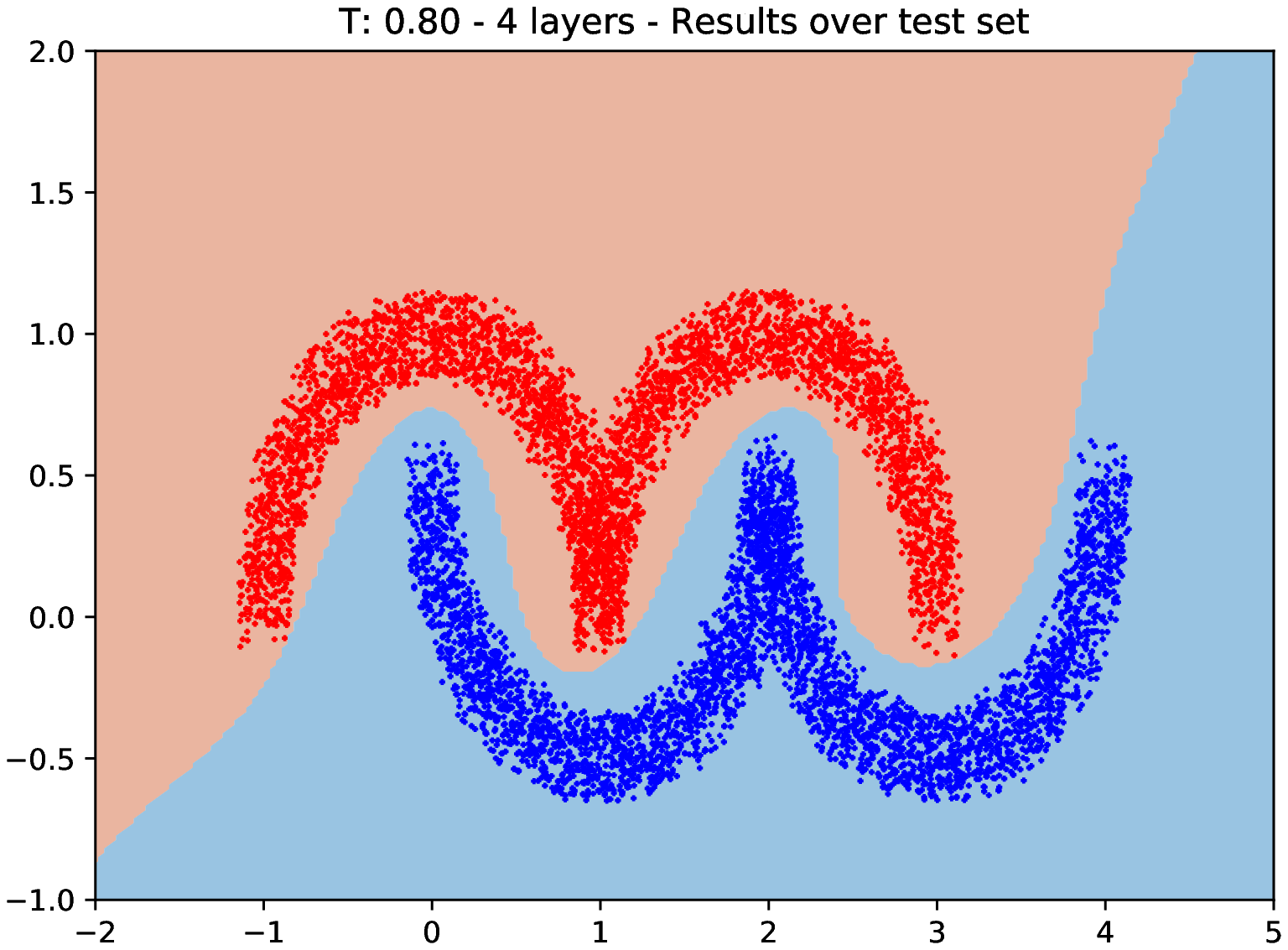}  
		\caption{}
		\label{fig:examplesClassification_doublemoons}
	\end{subfigure}%
	\caption{Results for the H$_1$-DNN architecture with (\textit{a}) 64 and (\textit{b}) 4 layers. Labelled datapoints for (\textit{a}) ``Swiss roll'' and (\textit{b}) ``double moons''. Coloured regions representing the predictions of the trained DNNs.}
	\label{fig:examplesClassification}
\end{figure}

\subsection{Experiments with the MNIST dataset}\label{sec:mnist}

We evaluate our methods on a standard image classification benchmark: MNIST\footnote{\url{http://yann.lecun.com/exdb/mnist/}}.

The dataset consists of $28 \times 28$ digital images in gray scale of hand-written digits from 0 to 9 with their corresponding labels. It contains 60,000 train examples and 10,000 test examples.

Following \cite{Haber_2017}, the network architecture consists of a convolutional layer followed by a Hamiltonian DNN and an output layer. 
The convolutional layer is a linear transformation that expands the data from 1 to 8 channels, and the output layer uses all the output values (i.e., no pooling is performed) for a linear transformation plus a softmax activation function to obtain a vector in $\mathbb{R}^{10}$ that represents the probabilities of the data to belong to each of the 10 classes.

For the Hamiltonian DNN, we use MS$_1$-DNNs and H$_2$-DNNs\footnote{Similar results can be obtained using other MS or H-DNNs.} with 2, 4, 8 and 16 layers. 
We set $h = 0.4$ for MS$_1$-DNNs and $h = 0.05$ for H$_2$-DNNs.
Moreover, we include as a baseline, the results obtained when omitting the Hamiltonian DNN block, i.e., when considering only a convolutional layer followed by the output layer.
The implementation details can be found in Appendix~\ref{ap:implementation_mnist}.


Table~\ref{tab:mnist}, summarizing the train and test accuracies of these networks, shows that both network structures achieve similar performance.
Note that, while the training errors are almost zero, the test errors are reduced when incrementing the number of layers.
Moreover, these results are in line with test accuracies obtained when using standard convolutional layers instead of Hamiltonian DNNs.

\begin{table}
	\footnotesize
	\begin{center}
		\caption{Classification accuracies over train and test sets for MNIST example using MS$_1$-DNN and H$_2$-DNN.}
		\label{tab:mnist}
		\begin{tabular}{|c|c c|c c|}
			\hline
			Number of& \multicolumn{2}{c}{MS$_1$-DNN} & \multicolumn{2}{|c|}{H$_2$-DNN} \\ \cline{2-5} 
			layers & Train & Test & Train & Test \\ \hline
			\hline
			0 &  93.730\% & 92.47\% & 93.828\% & 92.41\% \\ 
			2 & 99.570\% & 97.72\% & 99.815\% & 97.83\% \\   
			4 & 99.970\% & 98.03\% & 99.789\% & 98.02\% \\  
			8 &  99.982\% & 98.05\% & 99.707\% & 98.22\% \\ 
			16 & 100\% & 98.14\% & 99.503\% & 98.21\% \\ 
			\hline
		\end{tabular}
	\end{center}
\end{table}

\subsection{Gradient analysis} \label{sec:gradient}

In order to provide evidence of good numerical results during training, we analyse the evolution of the terms \eqref{eq:norm_grad} for deep networks.
At each optimization step, the gradient of the loss function with respect to each of the parameters is calculated (backward propagation).
In this analysis, we study the Jacobian matrices $\frac{\partial {\bf y}_N}{\partial {\bf y}_{j+1}}$ in \eqref{eq:norm_grad}, and we plot in Figure \ref{fig:gradients} their norms for some layers and for each of the 960 iterations composing the training process when using H-DNNs.

Using ``Double moons'' example, we train a 64-layer H$_1$-DNN, a variant of the same network where we impose the parameters of all layers to coincide, i.e. a \textit{time-invariant} network with ${\bf K}(t)={\bf K}$ and ${\bf b}(t) = {\bf b}$ and a fully-connected neural network (FCNN) with 32 layers\footnote{See Appendix~\ref{ap:implementation_fc} for implementation details.}.

For the H-DNNs case, it can be seen that  the norms of the terms $\frac{\partial {\bf y}_N}{\partial {\bf y}_{j+1}}$ for $j=0,10,\cdots,60$ are bounded in the intervals $[1,17]$ and $[1,37]$ during the whole training. Results are similar when using deeper networks.
Although it is not shown, we highlight that the gradients $\frac{\partial \mathcal{L}}{\partial \theta_{ij}}$ do converge to zero in approximately 500 and 300 iterations respectively, showing that the optimization algorithm has achieved a (possible local) minimum.
When using FCNN, however, it can be shown that gradient norms quickly tends to zero once the network is deep enough. 
For instance, for the 32-layer FCNN, the training stops in approximately 200 iterations and the final test accuracy is only 50\%.

\begin{figure}
	\centering
	\begin{subfigure}{.4\textwidth}
		\centering
		\includegraphics[width=1\linewidth]{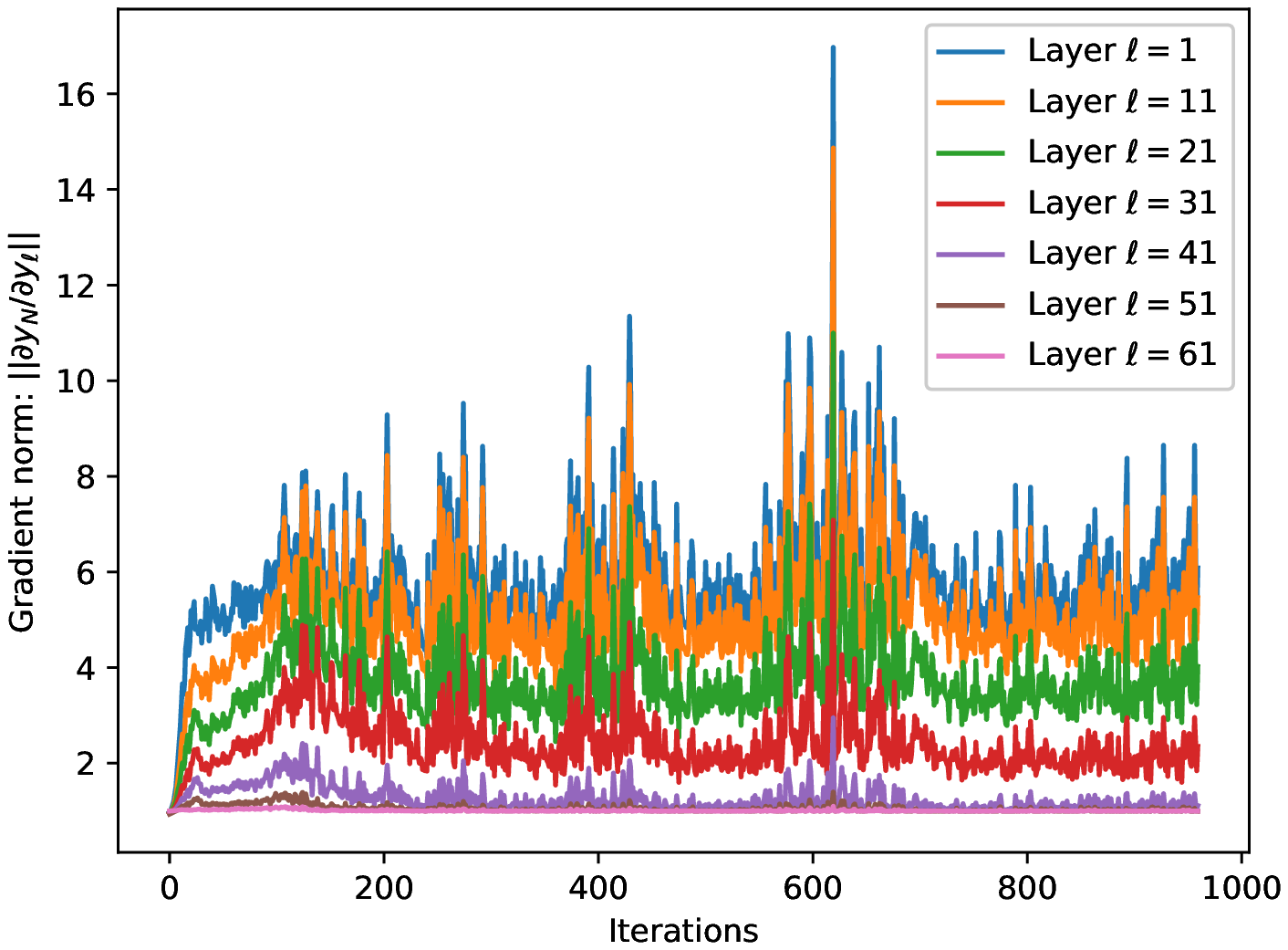} 
		\caption{}
	\end{subfigure}%
	\begin{subfigure}{.4\textwidth}
		\centering
		\includegraphics[width=1\linewidth]{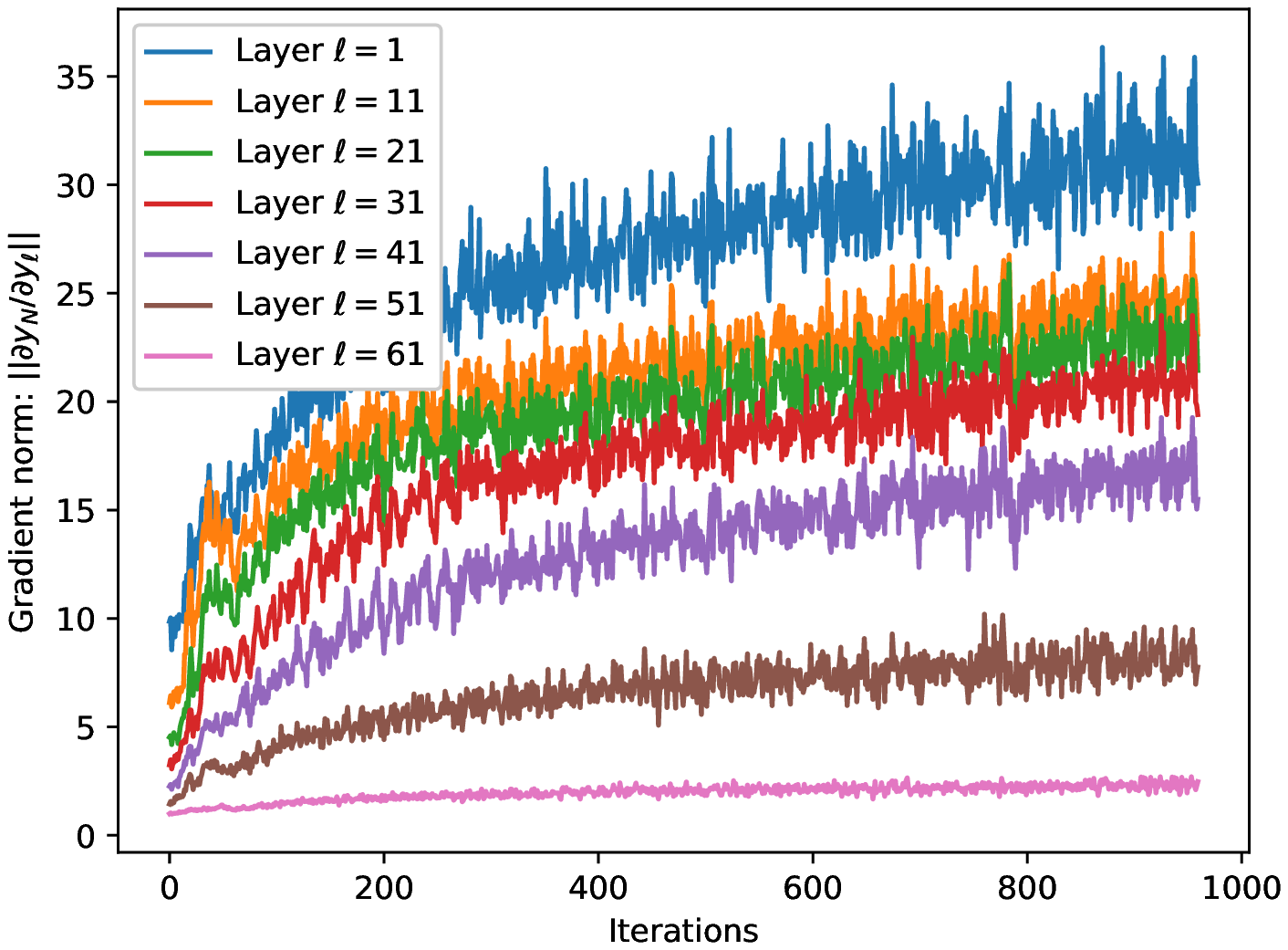} 
		\caption{}
	\end{subfigure}%
	\caption{Evolution of the 2-norm of $\frac{\partial {\bf y}_N}{\partial {\bf y}_\ell}$, $\ell=1,11,21,31,41,51,61$, during the training (960 iterations) of a 64-layer (\textit{a})  H$_1$-DNN and (\textit{b}) \textit{time-invariant} H$_1$-DNN.}
	\label{fig:gradients}
\end{figure}

\section{Conclusions} \label{sec:conclusion}

We present a unified framework for DNNs based on Hamiltonian systems which encompasses existing classes of marginally stable networks.
We define two new DNN structures, which are more flexible than existing ones, while having similar of better performance.
We present the analysis of the backward gradient dynamics for \textit{time-invariant} DNNs and show a simulation study for the \textit{time-varying} case.

Our work is a first step towards the design of new families of H-DNNs since different Hamiltonian energy functions originate new architectures. Future research will also focus on the use of different discretization schemes for defining alternative layer equations.


\section*{Acknowledgements}
Research supported by the Swiss National Science Foundation under the NCCR Automation (grant agreement 51NF40\_180545).

\newpage

\bibliographystyle{IEEEtran}
\bibliography{references}

\newpage

\appendix
\section{Proofs}

\subsection{Proof of Lemma \ref{lem:grad_dynamics}}\label{ap:lemma_grad}
Given the ODE
\begin{equation}
\dot{\bf y} = {\bf f}({\bf y}), \quad {\bf y}(0) = {\bf y}_0,
\label{eq:ap_ode}
\end{equation}
we want to calculate the dynamics of$\frac{\partial {\bf y}(T)}{\partial {\bf y}(T-t)}$.

\begin{proof}
	The solution to \eqref{eq:ap_ode} is
	\begin{equation}
	{\bf y}(t)={\bf y}(0) + \int_0^t {\bf f}({\bf y}(\tau)) d \tau
	\end{equation}
	Evaluating \eqref{eq:ap_ode} in $t=T$ and $t=T-t$ and subtracting them, we obtain
	\begin{equation*}
	{\bf y}(T) = {\bf y}(T-t) + \int_{T-t}^T {\bf f}({\bf y}(\tau)) d \tau
	= {\bf y}(T-t) + \int_{0}^t {\bf f}({\bf y}(s+T-t)) d s
	\end{equation*} 
	where to obtain the second equality, we change the integration variable by defining $\tau=s+T-t$.
	Therefore, we have
	\begin{equation*}
	\frac{\partial {\bf y}(T)}{\partial {\bf y}(T-t)} = {\bf I} + \frac{\partial \int_0^t {\bf f}({\bf y}(s+T-t)) ds }{\partial {\bf y}(T-t)}
	= {\bf I} + \int_0^t \frac{\partial {{\bf y}(s+T-t)}}{\partial {\bf y}(T-t)}\, \left.\frac{\partial {\bf f}}{\partial {\bf y}}\right\rvert_{{\bf y}(s+T-t)} ds
	\end{equation*}
	As a result we have that
	\begin{align*}
	\frac{\partial {\bf y}(T)}{\partial {\bf y}(T-t-\delta)}&=   \frac{\partial {\bf y}(T-t)}{\partial {\bf y}(T-t-\delta)}   \frac{\partial {\bf y}(T)}{\partial {\bf y}(T-t)} \\
	&=  \left( {\bf I}+ \int_0^\delta \frac{\partial {{\bf y}(s+T-t-\delta)}}{\partial {\bf y}(T-t-\delta)}\, \frac{\partial {\bf f}}{\partial {\bf y}}\Big\rvert_{{\bf y}(s+T-t-\delta)} ds \right)   \frac{\partial {\bf y}(T)}{\partial {\bf y}(T-t)}
	\end{align*}
	Therefore, it follows that
	\begin{align*}
	\frac{\partial {\bf y}(T)}{\partial {\bf y}(T-t-\delta)}-  \frac{\partial {\bf y}(T)}{\partial {\bf y}(T-t)}
	=    \left( \int_0^\delta \frac{\partial {{\bf y}(s+T-t)}}{\partial {\bf y}(T-t-\delta)} \, \frac{\partial {\bf f}}{\partial {\bf y}}\Big\rvert_{{\bf y}(s+T-t-\delta)} ds \right)     \frac{\partial {\bf y}(T)}{\partial {\bf y}(T-t)}
	\end{align*}
	Dividing both sides by $\delta$ and taking the limit for $\delta \rightarrow 0$, we obtain
	\begin{align} 
	\frac{d}{d t} \frac{\partial {\bf y}(T)}{\partial {\bf y}(T-t)} = \frac{\partial {\bf f}}{\partial {\bf y}}\Big\rvert_{{\bf y}(T-t)} \frac{\partial {\bf y}(T)}{\partial {\bf y}(T-t)}.
	\end{align}
\end{proof}

\subsection{Proof of Lemma \ref{lem:eig_imag}}\label{ap:lemma_eig}
\begin{proof}
	Since ${\bf J}^\top$ is a skew-symmetric matrix, ${\bf K}{\bf J}^\top {\bf K}^\top$ is also skew-symmetric. By following the proof in the Appendix of \cite{Chang18a}, we have that the eigenvalues of ${\bf K}{\bf J}^\top {\bf K}^\top {\bf D}$ are all imaginary.
	Since $\text{eig} \left( {\bf K}^\top {\bf D} {\bf K} {\bf J}^\top \right)= \text{eig} \left(  {\bf K}{\bf J}^\top {\bf K}^\top {\bf D} \right)$, we obtain that the eigenvalues of  $\left. \frac{\partial \bf f}{\partial {\bf y}} \right\rvert_{{\bf y}(T-t)}$ are all imaginary.
\end{proof}

\subsection{Proof of Lemma \ref{lem:multiplicity}}\label{ap:lemma_multiplicity}
We introduce the following Lemma which is needed to prove Lemma~\ref{lem:multiplicity}. In the following, ${\bf Q}^\top$ and ${\bf Q}^*$ denote the transpose and the conjugate transpose of ${\bf Q}$, respectively.

\begin{lemma}\label{lem:diag}
	Given a real skew-symmetric matrix ${\bf Q} = -{\bf Q}^\top \in \mathbb{R}^{n\times n}$ and a positive definite matrix ${\bf P}\in \mathbb{R}^{n\times n}$, we have ${\bf PQ}$ is diagonalizable.
\end{lemma}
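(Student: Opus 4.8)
The plan is to exploit the positive definiteness of ${\bf P}$ to realize ${\bf PQ}$ as a \emph{similarity} transform of a skew-symmetric matrix, and then to invoke the fact that real skew-symmetric matrices are normal, hence diagonalizable. Since diagonalizability is invariant under similarity, this will immediately give the claim.

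First I would introduce the symmetric positive definite square root ${\bf P}^{1/2}$ of ${\bf P}$, which exists and is invertible precisely because ${\bf P}$ is positive definite. The key observation is that ${\bf PQ}$ is similar to ${\bf P}^{1/2}{\bf Q}{\bf P}^{1/2}$, since
\begin{equation*}
{\bf P}^{-1/2}({\bf PQ}){\bf P}^{1/2} = {\bf P}^{-1/2}{\bf P}^{1/2}{\bf P}^{1/2}{\bf Q}{\bf P}^{1/2} = {\bf P}^{1/2}{\bf Q}{\bf P}^{1/2}.
\end{equation*}
Hence it suffices to prove that ${\bf M} \triangleq {\bf P}^{1/2}{\bf Q}{\bf P}^{1/2}$ is diagonalizable. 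Note that this is simultaneously a \emph{congruence} of ${\bf Q}$ (with the symmetric factor ${\bf P}^{1/2}$) and a similarity of ${\bf PQ}$, which is what makes the argument work.

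Next I would verify that ${\bf M}$ is skew-symmetric. Using the symmetry of ${\bf P}^{1/2}$ (so $({\bf P}^{1/2})^\top = {\bf P}^{1/2}$) and the skew-symmetry of ${\bf Q}$,
\begin{equation*}
{\bf M}^\top = ({\bf P}^{1/2})^\top {\bf Q}^\top ({\bf P}^{1/2})^\top = {\bf P}^{1/2}(-{\bf Q}){\bf P}^{1/2} = -{\bf M}.
\end{equation*}
A real skew-symmetric matrix satisfies ${\bf M}{\bf M}^\top = -{\bf M}^2 = {\bf M}^\top{\bf M}$, so ${\bf M}$ is normal, and by the spectral theorem for normal matrices it is unitarily diagonalizable over $\mathbb{C}$. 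Therefore ${\bf M}$ is diagonalizable, and so is the similar matrix ${\bf PQ}$, which concludes the proof.

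I expect the main step to be conceptual rather than computational: recognizing that conjugating ${\bf PQ}$ by ${\bf P}^{1/2}$ turns it into the congruence ${\bf P}^{1/2}{\bf Q}{\bf P}^{1/2}$ of the skew-symmetric ${\bf Q}$, which is exactly what forces the transformed matrix to be skew-symmetric and hence normal. The remaining ingredients — existence of the positive definite square root and normality/diagonalizability of real skew-symmetric matrices — are standard and require only the routine verifications above.
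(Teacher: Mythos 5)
Your proof is correct, and it takes a genuinely different route from the paper. The paper argues by contradiction: it assumes ${\bf PQ}$ has a generalized eigenvector ${\bf z}$ beyond an eigenvector ${\bf x}$, manipulates the two defining equations into the identity $-(\lambda^*+\lambda)\,{\bf x}^*{\bf P}^{-1}{\bf z} = {\bf x}^*{\bf P}^{-1}{\bf x}$, and then uses the fact that $\lambda$ is purely imaginary (established separately, with a reference to earlier work) to make the left side vanish while the right side cannot. Your argument instead conjugates by the symmetric square root: ${\bf P}^{-1/2}({\bf PQ}){\bf P}^{1/2} = {\bf P}^{1/2}{\bf Q}{\bf P}^{1/2}$ is skew-symmetric, hence normal, hence unitarily diagonalizable, and diagonalizability transfers back through the similarity. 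Both proofs implicitly take ``positive definite'' to mean symmetric positive definite (the paper needs this too, since it uses ${\bf x}^*{\bf P}^{-1}{\bf x}\neq 0$ for complex ${\bf x}$), so you are not assuming more than the paper does. What your approach buys: it is direct rather than by contradiction, it needs no external input about the spectrum, and in fact it recovers the purely-imaginary-eigenvalue property (the paper's Lemma~\ref{lem:eig_imag}) as a free byproduct, since the skew-symmetric matrix ${\bf P}^{1/2}{\bf Q}{\bf P}^{1/2}$ has purely imaginary eigenvalues and is similar to ${\bf PQ}$ --- so your single argument would subsume both of the paper's spectral lemmas. What the paper's approach buys: it is more elementary, using only linear algebra of (generalized) eigenvectors and quadratic forms, with no appeal to the existence of matrix square roots or the spectral theorem for normal matrices.
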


\begin{proof}
	Assume, by contradiction, that ${\bf PQ}$ is not diagonalizable.
	Then it has an eigenvalue $\lambda$ with a corresponding eigenvector ${\bf x}\neq {\bf 0}$ and a corresponding generalized eigenvector ${\bf z}\neq {\bf 0}$ such that
	\begin{align*}
	({\bf PQ} -\lambda {\bf I}){\bf x}={\bf 0}, \quad
	({\bf PQ} - \lambda {\bf I}){\bf z}={\bf x}
	\end{align*}
	Therefore, since  ${\bf P}$ is invertible and we have
	\begin{align*}
	({\bf Q}-\lambda {\bf P}^{-1}){\bf x}={\bf 0},  \quad
	({\bf Q}-\lambda {\bf P}^{-1} ){\bf z}={\bf P}^{-1}{\bf x}
	\end{align*}
	By left multiplying the above equations with ${\bf z}^*$ and ${\bf x}^*$ respectively, we have
	\begin{align}
	{\bf z}^*({\bf Q} - \lambda {\bf P}^{-1}) {\bf x} & = 0 \label{eq.1}\\
	{\bf x}^*({\bf Q} - \lambda {\bf P}^{-1} ){\bf z} & = {\bf x}^*{\bf P}^{-1}{\bf x} \label{eq.2}
	\end{align}
	Taking the conjugate transpose of LHS of (\ref{eq.1}), we have
	\begin{align*}
	{\bf x}^*({\bf Q}^*-\lambda^* {\bf P}^{-1}){\bf z}=-{\bf x}^*{\bf Q}{\bf z}-\lambda^* {\bf x}^*{\bf P}^{-1}{\bf z} = 0
	\end{align*}
	Therefore, we have	${\bf x}^*{\bf Q}{\bf z}=-\lambda^* {\bf x}^*{\bf P}^{-1}{\bf z}$ and substituting into (\ref{eq.2}), one obtains
	\begin{align}
	-(\lambda^*+\lambda){\bf x}^*{\bf P}^{-1}{\bf z} = {\bf x}^*{\bf P}^{-1}{\bf x} \label{eq.3}
	\end{align}
	Since $\lambda$ is pure imaginary, we know that the LHS of (\ref{eq.3}) is equal to zero. 
	However, since ${\bf x}\neq {\bf 0}$, the RHS of (\ref{eq.3}) is not zero. 
	Therefore, we have a contradiction concluding the proof.
\end{proof}


\begin{proof}
	\textbf{(Lemma~\ref{lem:multiplicity})\;}
	Since ${\bf J}^\top$ is skew-symmetric and ${\bf K}^\top{\bf D}({\bf y}) {\bf K} $ is positive definite, in view of Lemma~\ref{lem:diag}, we know that  $\frac{\partial \bf f}{\partial {\bf y}}$  is diagonalizable for all $\bf y$.
	Therefore,   the algebraic and geometric multiplicity of repeated eigenvalues of   $\frac{\partial \bf f}{\partial {\bf y}}$ coincides.
\end{proof}

\section{Implementation details}\label{ap:implementation}
DNN architectures and training algorithms are implemented using PyTorch library\footnote{\url{https://pytorch.org/}}. 

\subsection{Binary classification datasets}\label{ap:implementation_2d}
For two-class classification problems we used 5000 datapoints and a mini-batch size of 125, for both training and test data. For the optimization algorithm, we use coordinate gradient descent, i.e. a modified version of stochastic gradient descent (SGD) with Adam ($\beta_1 = 0.9$, $\beta_2 = 0.999$) \cite{Haber_2017}. The cross-entropy loss has been used to compare the predicted outputs and the true labels.
In every iteration of the algorithm, first the optimal weights of the output layer are computed given the last updated parameters of the hidden layers, and then, a step update of the hidden layers' parameters is performed by keeping fixed the output parameters. 
The training consist in 50 epochs and each of them has maximum 10 iterations to compute the output layer weights. 
The learning rate was set to 0.05 and the weight decay for the output layer is constant and set to $1 \times 10^{-4}$.
The $\alpha$ coefficient of the regularization is set to $5 \times 10^{-3}$.

\subsection{MNIST dataset}\label{ap:implementation_mnist}
We use the complete MNIST dataset (60,000 training examples and 10,000 test examples) and a mini-batch size of 100. 
For the optimization algorithm we use SGD with Adam and cross-entropy loss.
The learning rate is initialized to be 0.04 and decayed with $\gamma = 0.8$ each epoch. The total training step is 40 epochs. 
We use $\alpha = 1\times 10^{-3}$ for the regularization term in all H-DNNs.
The weight decay is constant and set to $2\times 10^{-4}$.

\subsection{Fully-connected neural network}\label{ap:implementation_fc}
The layer equation of a fully connected network is ${\bf y}_{k+1}=\sigma({\bf K}_k{\bf y}_{k}+{\bf b}_k)$, with activation function $\sigma(\cdot) = \tanh(\cdot)$ and trainable parameters ${\bf K}_k \in \mathbb{R}^{n \times n}$ and ${\bf b}_k \in \mathbb{R}^n$ for $k=0,1,\dots,N-1$ where $N$ is the number of layers of the network. We use the same implementation as described in Appendix~\ref{ap:implementation_2d}, with a weight decay of $2\times 10^{-4}$.

\end{document}